\newtheorem{thm}{Theorem}
\newtheorem{assmptn}{Assumption}
\newtheorem{rmk}{Remark}
\newenvironment{proof}[1][Proof]{\begin{trivlist}
\item[\hskip \labelsep {\bfseries #1}]}{\end{trivlist}}
\newcommand{\st}{{\rm s.t.}}
\newcommand{\bI}{\mathbf{I}}
\newcommand{\bX}{\mathbf{X}}
\newcommand{\bR}{\mathbf{R}}
\newcommand{\bY}{\mathbf{Y}}
\newcommand{\bx}{\mathbf{x}}
\newcommand{\bS}{\mathbf{S}}
\newcommand{\bw}{\mathbf{w}}
\def\bA{{\mathbf A}}
\def\ba{{\mathbf a}}
\def\bY{{\mathbf Y}}
\def\by{{\mathbf y}}
\def\fd{{\rm d}}
\begin{document}
\title{Computational Intractability  of Dictionary Learning for Sparse Representation}
%

\author{Meisam Razaviyayn$^\dag$\thanks{$^\dag$\;{\color{black}Department of Electrical and Computer Engineering,} University of Minnesota, Minneapolis, 55455, USA. (e-mail:
 \texttt{\{meisam,tsen0058,luozq\}}@umn.edu)}, Hung-Wei Tseng$^\dag$ and Zhi-Quan Luo$^\dag$
\thanks{$^*$ This research is supported in part by the National Science Foundation, grant number DMS-1015346.}}
\maketitle
\begin{abstract}
In this paper we consider the  dictionary learning problem for sparse representation. We first show that this problem is NP-hard by polynomial time reduction of the densest cut problem. Then, using successive convex approximation strategies, we  propose  efficient dictionary learning schemes to solve several practical formulations of this problem to stationary points. Unlike many existing algorithms in the literature, such as K-SVD, our proposed dictionary learning scheme is theoretically guaranteed to converge to the set of stationary points under certain mild assumptions. For the image denoising application, the performance and the efficiency of the proposed dictionary learning scheme are comparable to that of K-SVD algorithm in simulation.
\end{abstract}
\begin{keywords}
Dictionary learning, sparse representation, computational complexity, K-SVD.
\end{keywords}

\section{Introduction}
\label{sec:intro}
The idea of representing a signal with few samples/observations dates back to the classical result of Kotelnikon, Nyquist, Shannon, and Whittaker  \cite{kotel1933carrying, nyquist1928certain, shannon1949communication, whittaker1915functions,prony1995essai}. This idea has evolved over time, and culminated  to the {\it compressive sensing} concept in recent years \cite{donoho2006compressed, candes2006robust}. The {\it compressive sensing} or {\it sparse recovery} approach relies on the observation that many practical signals can be sparsely approximated in a suitable over-complete basis (i.e., a dictionary). In other words, the signal can be approximately written as a linear combination of only a few components (or {\it atoms}) of the dictionary. This observation is a key to many lossy compression methods such as JPEG and MP3.

Theoretically, the exact sparse recovery is possible with high probability under certain conditions. More precisely, it is demonstrated that if the linear measurement matrix satisfies some conditions such as null space property (NSP) or restricted isometry property (RIP), then the exact recovery is possible  \cite{donoho2006compressed, candes2006robust}. These conditions are satisfied with high probability for different matrices such as Gaussian random matrices, Bernoulli random matrices, and partial random Fourier matrices.

In addition to the theoretical advances, compressive sensing has shown great potential in various applications. For example, in the nuclear magnetic resonance (NMR) imaging application, compressive sensing can help reduce the radiation time \cite{ lustig2006rapid, lustig2005faster}. Moreover, the compressive sensing technique has been successfully applied to many other practical scenarios including sub-Nyquist sampling \cite{gedalyahu2010time, mishali2009blind}, compressive imaging \cite{duarte2008single, marcia2009compressive}, and compressive sensor networks \cite{duarte2005distributed, davenport2010joint}, to name just a few.

In some of the aforementioned applications, the sensing matrix and dictionary are pre-defined using application domain knowledge. However, in most applications, the dictionary is not known a-priori and must be learned using a set of training signals. It has been observed that learning a good dictionary can substantially improve the compressive sensing performance, see \cite{lee2006efficient, kreutz2003dictionary, elad2006image, lewicki2000learning, mairal2008supervised, rubinstein2010dictionaries, aharon2005k}. In these applications, dictionary learning is the most crucial step affecting the performance of the compressive sensing approach.

To determine a high quality dictionary, various learning algorithms have been proposed; see, e.g., \cite{aharon2005k,lee2006efficient,engan1999method,olshausen1997sparse}. These algorithms are typically composed of two major steps: 1) finding an approximate sparse representation of the training signals 2) updating the dictionary using the sparse representation.

In this paper, we consider the dictionary learning problem for sparse representation. We first establish the NP-hardness of this problem. Then we consider different formulations of the dictionary learning problem and propose several efficient algorithms to solve this problem. In contrast to the existing dictionary training algorithms  \cite{engan1999method, aharon2005k, lee2006efficient}, our methods neither solve Lasso-type subproblems nor find the active support of the sparse representation vector at each step; instead, they require only simple inexact updates in closed form. Furthermore, unlike most of the existing methods in the literature, e.g., \cite{lee2006efficient,aharon2005k}, the iterates generated by the proposed dictionary learning algorithms are theoretically guaranteed to converge to the set of stationary points under certain mild assumptions.


\section{Problem Statement}
\label{sec:format}

Given a set of training signals
\small $\small\mathrm{Y} = \{\by_i \in \mathbb{R}^n\mid i=1,2,\ldots,N\},$
\normalsize our task is to find a dictionary
\small $\small\mathrm{A} = \{\ba_i\in \mathbb{R}^n\mid  i=1,2,\ldots,k\}$\normalsize
that can sparsely represent the training signals in the set $\mathrm{Y}$. Let $\bx_i \in \mathbb{R}^k, \;i= 1,\ldots,N$, denote the coefficients of sparse representation of the signal $\by_i$, i.e., $\by_i = \sum_{j=1}^k \ba_j x_{ij}$,  where $x_{ij}$ is the $j$-th component of signal $\bx_i$. By concatenating all the training signals, the dictionary elements, and the coefficients, we can define the matrices $\bY \triangleq [\by_1,\ldots,\by_N]$, $\bA  \triangleq [\ba_1,\ldots,\ba_k]$, and $\bX = [\bx_1,\ldots,\bx_N]$. Having these definitions in our hands, the dictionary learning problem for sparse representation can be stated as
\small
\begin{equation}
\label{eq:OriginalDist}
\small
\min_{\bA,\bX}\;\; {\rm d} (\bY,\bA,\bX) \quad \st\; \bA \in \mathcal{A}, \; \bX \in \mathcal{X},
\normalsize
\end{equation}
\normalsize
where $\mathcal{A}$ and $\mathcal{X}$ are two constraint sets. The function $\fd(\cdot,\cdot,\cdot)$ measures our model goodness of fit.
In the next section, we analyze the computational complexity of one of the most popular forms of problem \eqref{eq:OriginalDist}.

\section{Complexity Analysis}
\label{sec:pagestyle}
Consider a special case of problem \eqref{eq:OriginalDist} by choosing the distance function to be the Frobenius norm and imposing sparsity by considering the constraint set $\mathcal{X} = \{\bX \in \mathbb{R}^{k\times N}\;\big|\; \|\bx_i\|_0 \leq s\}$. Then the optimization problem~\eqref{eq:OriginalDist} can be re-written as
\small
\begin{equation}
\small
\label{eq:OriginalFrob}
\min_{\bA,\bX}\;\; \|\bY-\bA\bX\|_F^2, \ \ \st\; \|\bx_i\|_0 \leq s, \;\forall\; i=1,\ldots,N.
\normalsize
\end{equation}
\normalsize
This formulation is very popular and is considered in different studies; see, e.g., \cite{jiang2012submodular,aharon2005k}. The following theorem characterizes the computational complexity of \eqref{eq:OriginalFrob} by showing its NP-hardness. In particular, we show that even for the simple case of $s=1$ and $k=2$, problem \eqref{eq:OriginalFrob} is NP-hard. To state our result, let us define the following concept: let $(\bA^*,\bX^*)$ be a solution of \eqref{eq:OriginalFrob}. For $\epsilon>0$, we say a point $(\tilde\bA,\tilde\bX)$ is an {\it$\epsilon$-optimal solution} of \eqref{eq:OriginalFrob} if $\|\bY-\tilde\bA \tilde\bX\|_F^2 \leq \|\bY-\bA^* \bX^*\|_F^2 + \epsilon$.
\begin{thm}
\label{thm:NPhard}
Assume $s=1$ and $k=2$. Then finding an $\epsilon$-optimal algorithm for solving \eqref{eq:OriginalFrob} is NP-hard. More precisely, there is no polynomial time algorithm in $N,n, \lceil{\frac{1}{\epsilon}}\rceil$ that can solve \eqref{eq:OriginalFrob} to $\epsilon$-optimality, unless $P = NP$.
\end{thm}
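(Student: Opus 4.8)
The plan is to reduce the (NP-hard) \textsc{Densest Cut} problem to the task of computing an $\epsilon$-optimal solution of \eqref{eq:OriginalFrob} with $s=1$, $k=2$. I would start by eliminating the continuous variables. When $s=1$ and $k=2$, a feasible $(\bA,\bX)$ is equivalent to a partition of $\{1,\dots,N\}$ into $S_1,S_2$ (with $i\in S_j$ meaning that $\by_i$ is represented using atom $\ba_j$ alone) together with one real coefficient per signal; we may assume $\|\ba_1\|=\|\ba_2\|=1$, and for a fixed partition the optimal coefficient of $i\in S_j$ is the projection onto $\ba_j$, so its minimal squared residual is $\|\by_i\|^2-(\ba_j^{\top}\by_i)^2$. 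Summing and then optimizing over the two unit atoms shows that \eqref{eq:OriginalFrob} is equivalent to the purely combinatorial problem
\[
\max_{\{1,\dots,N\}=S_1\cup S_2,\; S_1\cap S_2=\emptyset}\; \lambda_{\max}\!\Big(\sum_{i\in S_1}\by_i\by_i^{\top}\Big)+\lambda_{\max}\!\Big(\sum_{i\in S_2}\by_i\by_i^{\top}\Big),
\]
with $\|\bY-\bA\bX\|_F^2$ equal to $\sum_i\|\by_i\|^2$ minus the above value. Thus the whole problem is to split the data into two clusters so that the sum of the top eigenvalues of the two (unnormalized) scatter matrices is as large as possible.

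Next I would construct the instance. Given a graph $G=(V,E)$ with $|V|=m$, put one signal $\by_e=\mathbf e_u-\mathbf e_v$ for every edge $e=\{u,v\}\in E$ (together with a small set of auxiliary/weighted signals whose only role is to pin down the behaviour of the ``leftover'' cluster, as discussed below), so that $n=m$ and $\sum_{e\in S}\by_e\by_e^{\top}$ is exactly the Laplacian of the subgraph $(V,S)$. For the achievability direction: given any vertex cut $(C,\bar C)$, the vector $\bw=\alpha\mathbf 1_C+\beta\mathbf 1_{\bar C}$ with the optimal ratio $\beta/\alpha=-|C|/|\bar C|$ is a unit vector satisfying $(\bw^{\top}\by_e)^2=\frac{m}{|C|\,|\bar C|}$ on every edge $e$ crossing $C$ and $0$ on every other edge; hence routing the crossing edges into one cluster and using the atom $\bw$ already contributes at least $\frac{m\,e(C,\bar C)}{|C|\,|\bar C|}$, i.e. $m$ times the density of the cut. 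Choosing $C$ to be a densest cut therefore exhibits a feasible solution of the displayed problem whose value is at least $m$ times the densest-cut density plus the (cut-independent, by construction) contribution of the other cluster.

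The $\epsilon$-optimality clause is then handled by a polynomial separation. The densest-cut density $\max_{C}e(C,\bar C)/(|C|\,|\bar C|)$ is a ratio of integers bounded by $\mathrm{poly}(m)$, so two graphs with distinct densest-cut densities yield instances of \eqref{eq:OriginalFrob} whose optimal objectives differ by at least $\delta\ge 1/\mathrm{poly}(m)$; set $\epsilon=\delta/3$, so that $\lceil 1/\epsilon\rceil=\mathrm{poly}(m)$. Given any $\epsilon$-optimal $(\tilde\bA,\tilde\bX)$, take the partition induced by the supports of the columns of $\tilde\bX$ and re-optimize the atoms and coefficients in closed form as in the first paragraph; this cannot increase the objective, so it yields an integral partition within $\epsilon<\delta/2$ of the combinatorial optimum, which (given the correspondence above) determines the densest-cut density. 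An algorithm running in $\mathrm{poly}(N,n,\lceil 1/\epsilon\rceil)=\mathrm{poly}(m)$ time would therefore solve \textsc{Densest Cut}, so no such algorithm can exist unless $P=NP$.

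\noindent\textbf{The main obstacle.}
The crux is the converse half of the correspondence in the second paragraph: since $\ba_1,\ba_2$ are unconstrained directions in $\mathbb R^n$ and the coefficients are arbitrary reals, I must show that \emph{no} two-clustering beats ``align one atom with the densest cut'' by more than $o(\delta)$ --- in particular, ruling out solutions in which an atom exploits the spectrum of some unrelated subgraph, and accounting exactly for the Rayleigh quotient of the leftover cluster, which in general is a messy subgraph Laplacian. This is precisely why the construction needs the auxiliary/weighted signals: they must simultaneously (i) make the top eigenvalue of every attainable scatter matrix a transparent combinatorial quantity, and (ii) keep the YES/NO gap $\delta$ inverse-polynomial so that it survives the $\epsilon$-slack, all while keeping $N$, $n$, the bit-lengths of $\bY$, and $1/\epsilon$ polynomial in $m$. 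Turning the soft achievability bound into such a tight two-sided identity, with (i) and (ii) holding at once, is where essentially all the effort will go.
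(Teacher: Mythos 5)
There is a genuine gap, and you have named it yourself: the converse half of your correspondence is never established, and it is exactly the hard part. Your reformulation of the $s=1$, $k=2$ problem as ``partition the signals into two clusters and maximize $\lambda_{\max}$ of each scatter matrix'' is correct, and your achievability computation with the two-level vector $\bw=\alpha\mathbf 1_C+\beta\mathbf 1_{\bar C}$ is fine. But with one signal per \emph{edge}, the scatter matrices are arbitrary subgraph Laplacians, and there is no clean combinatorial identity tying $\max_{S_1\cup S_2=E}\bigl[\lambda_{\max}(L_{S_1})+\lambda_{\max}(L_{S_2})\bigr]$ to the densest-cut density: the top eigenvector of a subgraph Laplacian need not be a two-level cut vector, the leftover cluster's contribution is cut-dependent, and the ``auxiliary/weighted signals'' that are supposed to make it cut-independent and the spectrum ``transparent'' are left entirely unspecified. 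Without that construction the separation argument has nothing to separate (the claimed gap $\delta\ge 1/\mathrm{poly}(m)$ between attainable objective values presupposes the very correspondence that is missing), so the reduction does not go through as written.

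The paper sidesteps precisely this difficulty by transposing your encoding: the data columns are the \emph{vertex} incidence vectors in $\mathbb R^{|E|}$, so that with the extra affine constraint $\mathbf 1^T\bx_i'=1$ the problem \eqref{eq:DCP} is exactly 2-means clustering of vertices, for which the optimal atom of each cluster is its centroid in closed form and the objective equals $2|E|-|V|\,|E(P,Q)|/(pq)$ — an exact two-sided identity with the cut density (Claim 1), with an automatic $16/N^3$ spacing between distinct values (Claim 2). The remaining issue, that \eqref{eq:OriginalFrob} has no sum-to-one constraint on the coefficients, is handled not spectrally but by appending a large constant row $M=6N^7$ to every column, which forces every nonzero coefficient of an optimal $\bX$ to lie within $1/(3N^6)$ of one and yields the quantitative sandwich $h(\bw)\le h(\bw')\le h(\bw_+)\le h(\bw)+\tfrac{28}{3N^3}$ (Claim 3); choosing $\epsilon=\tfrac{28}{3N^3}<\tfrac{16}{N^3}$ then closes the argument. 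If you want to salvage your edge-based route you would have to supply the analogue of these three claims — an exact (or provably tight up to $o(\delta)$) formula for the optimum of the eigenvalue-clustering problem and an inverse-polynomial spacing of its attainable values — which your proposal defers rather than proves; alternatively, switch to the vertex encoding plus the large-row trick, where all of this is elementary.
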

\begin{proof}
The proof is based on the polynomial time reduction of the densest cut problem. The densest cut problem can be stated as follows:\\
{\bf \small Densest Cut Problem:} \normalsize Given a graph $\mathcal{G} = (V,E)$, the goal is to maximize the ratio $\frac{|E(P,Q)|}{|P|\; \cdot|Q|}$ over all the bipartitions $(P,Q)$ of the vertices of the graph $\mathcal{G}$. Here $E(P,Q)$ denotes the set of edges between the two partitions and the operator $|\cdot|$ returns the cardinality of a set.\\
Given an undirected graph $\mathcal{G}$, we put an arbitrary directions on it and we define $\bY'$ to be the incidence transpose matrix of the directed graph. In other words, $\bY' \in \mathbb{R}^{|E| \times |V|}$ with
\begin{itemize}
\item $\bY'_{ij} = 1$ if  edge $i$ leaves vertex $j$
\item $\bY'_{ij} = -1$ if edge $i$ enters vertex $j$
\item $\bY'_{ij} = 0$ otherwise
\end{itemize}
Now let us consider the following optimization problem:
\begin{equation}
\label{eq:DCP}
\min_{\bA',\bX}\;\; \|\bY'-\bA'\bX'\|_F^2\quad \st\;  \|\bx'_i\|_0 \leq s, \; \mathbf{1}^T \bx'_i =1,\;\forall i
\end{equation}
with $s=1$ and $k=2$. \\
{\bf \small Claim 1:} Problem \eqref{eq:DCP} is equivalent to the densest cut problem over the graph~$\mathcal{G}$ \cite{aloise2009np}.\\
{\bf \small Claim 2:} Consider two different feasible points $\bX'_1$ and $\bX'_2$ in problem \eqref{eq:DCP}. Let $\bA'_1$ (resp. $\bA'_2$) be the optimal solution of \eqref{eq:DCP} after fixing the variable $\bX'$ to $\bX'_1$ (resp. $\bX'_2$). Let us further assume that $\|\bY' - \bA'_1\bX_1\| \neq \|\bY' - \bA'_2\bX_2\|$. Then, $| \;\|\bY' - \bA'_1\bX_1\| - \|\bY' - \bA'_2\bX_2\|\;| \geq \frac{16}{N^3}$.\\
The proof of claims 1 and 2 are relegated to the appendix section. Clearly, problem \eqref{eq:DCP} is different from \eqref{eq:OriginalFrob}; however the only difference is in the existence of the extra linear constraint in \eqref{eq:DCP}. To relate these two problems, let us define the following problem:
\begin{equation}
\label{eq:OriginalNPhardness}
\min_{\bA,\bX}\;\; \|\bY-\bA\bX\|_F^2 \quad \st\;   \|\bx_i\|_0 \leq s, \;\forall i.
\end{equation}
where $\bX$ is of the same dimension as $\bX'$, but the matrices $\bY$ and $\bA$ have one more row than $\bY'$ and $\bA'$. Here the matrices $\bY$ and $\bA$ have the same number of columns as $\bY'$ and $\bA'$, respectively.   By giving a special form to the matrix $\bY$, we will relate the optimization problem \eqref{eq:OriginalNPhardness} to \eqref{eq:DCP}. More specifically, each column of $\bY$ is defined as follows:
\[
\by_i = \left[
\begin{array}{c}
M\\
\by_i'\\
\end{array}
\right]
\]
with $M = 6 N^7$. Clearly, the optimization problem~\eqref{eq:OriginalNPhardness} is of the form \eqref{eq:OriginalFrob}.  Let $(\bA^*,\bX^*)$ denote the optimizer of \eqref{eq:OriginalNPhardness}. Then it is not hard to see that the first row of the matrix $\bA^*$ should be nonzero and hence by a proper normalization of the matrices $\bA^*$ and $\bX^*$, we can assume that the first row of the matrix $\bA^*$ is $M$, i.e., $a_{11}^* = a_{12}^* = M$. Define $h(\bA,\bX) \triangleq \|\bY' - \bA \bX\|_F^2$. Let $\bw' = (\bA'^*,\bX'^*)$ denote the minimizer of \eqref{eq:DCP}. Similarly, define $\bw \triangleq (\tilde{\bA}^*,\bX^*)$ where $\tilde{\bA}^* \triangleq \bA_{2:n,:}^*$ is the minimizer of \eqref{eq:OriginalNPhardness}, excluding the first row.  Furthermore, define $\bw_+ \triangleq \left(\tilde{\bA}^*,\bX_{+}^*\right)$, where $\bX_{+}^*$ is obtained by replacing the nonzero entries of $\bX^*$ with one. Having these definitions in our hands, the following claim will relate the two optimization problems \eqref{eq:DCP} and \eqref{eq:OriginalNPhardness}.\\
{\bf \small Claim 3:} $h(\bw) \leq h(\bw') \leq h(\bw_+) \leq h(\bw) + \frac{28}{3 N^3}$.\\
The proof of this claim can be found in the appendix section. \\
Now set $\epsilon = \frac{28}{3 N^3}$. If we can solve the optimization problem~\eqref{eq:OriginalNPhardness} to the $\epsilon$-accuracy, then according to Claim 3, we have the optimal value of problem~\eqref{eq:DCP} with accuracy $\epsilon = \frac{28}{3 N^3}$. Noticing that $\frac{16}{N^3}>\frac{28}{3 N^3}$ and using Claim 2, we can further conclude that  the exact optimal solution of \eqref{eq:DCP} is known; which implies that the optimal value of the original densest cut problem is known (according to Claim 1). The NP-hardness of the densest cut problem will complete the proof.
\end{proof}

\begin{rmk}
Note that in the above NP-hardness result, the input size of $\lceil\frac{1}{\epsilon}\rceil$ is  considered instead of $\lceil \log(\frac{1}{\epsilon})\rceil$. This  in fact implies a stronger result that there is no quasi-polynomial time algorithm for solving \eqref{eq:OriginalFrob}; unless P=NP.
\end{rmk}
It is worth noting that the above NP-hardness result is different from (and is not a consequence of) the compressive sensing NP-hardness result in \cite{natarajan1995sparse}. In fact, for a fixed sparsity level $s$,  the compressive sensing problem is no longer NP-hard, while the dictionary learning problem considered herein remains NP-hard (see Theorem~\ref{thm:NPhard}).

\section{Algorithms}
\label{sec:typestyle}
\subsection{Optimizing the goodness of fit}
In this section, we assume that the function $\fd(\cdot)$ is composed of a smooth part and a non-smooth part for promoting sparsity, i.e., $\fd(\bY,\bA,\bX) = \fd_1(\bY,\bA,\bX) + \fd_2(\bX)$, where $\fd_1$ is smooth and $\fd_2$ is continuous and possibly non-smooth. Let us further assume that the sets $\mathcal{A} , \mathcal{X}$ are closed and convex.  Our approach to solve \eqref{eq:OriginalDist} is to apply the general block successive upper-bound minimization framework developed in \cite{razaviyayn2013unified}. More specifically, we propose to alternately update the variables $\bA$ and $\bX$. Let $(\bA^r,\bX^r)$ be the point obtained by the algorithm at iteration $r$. Then, we select one of the following methods to update the dictionary variable $\bA$ at iteration $r+1$:
\begin{itemize}
\small
\item[(a)]  $\bA^{r+1} \leftarrow \displaystyle{\arg\min_{\bA \in \mathcal{A}}}\; \fd(\bY,\bA,\bX^r) $
\item[(b)]  $\bA^{r+1} \leftarrow \displaystyle{\arg\min_{\bA \in \mathcal{A}}} \; \langle\nabla_\bA \fd_1(\bY,\bA^r,\bX^r), \bA \rangle + \frac{\tau_a^r}{2} \|\bA-\bA^r\|_F^2 = \mathcal{P}_{\mathcal{A}}\left(\bA^r - \frac{1}{\tau_a^r}\nabla_\bA \fd_1(\bY,\bA^r,\bX^r)\right)$
\normalsize
\end{itemize}
and we update the variable $\bX$ by
\begin{itemize}
\small
\item $\bX^{r+1} \leftarrow \displaystyle{\arg\min_{\bX \in \mathcal{X}}} \; \langle\nabla_\bX \fd_1(\bY,\bA^{r+1},\bX^r), \bX \rangle + \frac{\tau_x^r}{2} \|\bX-\bX^r\|_F^2 + \fd_2(\bX)$.
\normalsize
\end{itemize}
Here the operator $\langle\cdot,\cdot\rangle$ denotes the inner product; the superscript $r$ represents the iteration number; the notation $\mathcal{P}_{\mathcal{A}}(\cdot)$ is the projection operator to the convex set $\mathcal{A}$; and  the constants $\tau_a^r \triangleq \tau_a(\bY,\bA^r,\bX^r)$ and $\tau_x^r \triangleq \tau_x(\bY,\bA^{r+1},\bX^r)$ are chosen such that
\small
\begin{equation}
\nonumber
\small
\begin{split}
\fd_1(\bY,\bA,\bX^r)& \leq \fd_1(\bY,\bA^r,\bX^r) +   \langle\nabla_\bA \fd_1(\bY,\bA^r,\bX^r), \bA-\bA^r \rangle \\
&+ \frac{\tau_a^r}{2} \|\bA-\bA^r\|_F^2,\; \forall\; \bA \in \mathcal{A}
\end{split}
\normalsize
\end{equation}
\normalsize
and
\small
\begin{align}
\small
\fd(\bY,\bA^{r+1},\bX)& \leq \fd_1(\bY,\bA^{r+1},\bX^r) + \fd_2(\bX)+ \frac{\tau_x^r}{2} \|\bX-\bX^r\|_F^2 \nonumber\\
+ \langle\nabla_\bX &\fd_1(\bY,\bA^{r+1},\bX^r), \bX-\bX^r \rangle,\; \forall\; \bX \in \mathcal{X}. \label{eq:Xupperbound}
\normalsize
\end{align}
\normalsize
It should be noted that each step of the algorithm requires solving an optimization problem. For the commonly used objective functions and constraint sets, the solution to these optimization problems is often in closed form. In addition, the update rule (b) is the classical gradient projection step which can be viewed as an approximate version of (a). As we will see later, for some special choices of the function $\fd(\cdot)$ and the set $\mathcal{A}$, using (b) leads to a closed form update rule, while (a) does not. In the sequel, we specialize this framework to different popular choices of the objective functions and the constraint sets.\\

\noindent{\it Case I: Constraining the total dictionary norm}\\
For any $\beta>0$, we consider the following optimization problem
\small
\begin{equation}
\small\label{eq:case1}
\min_{\bA,\bX}\;\; \frac{1}{2}\|\bY-\bA\bX\|_F^2 + \lambda \|\bX\|_1 \quad\st\;   \|\bA\|_F^2\leq \beta,
\normalsize
\end{equation}
\normalsize
where $\lambda$ denotes the regularization parameter. By simple calculations, we can check that all the steps of the proposed algorithm can be done in closed form. More specifically, using the dictionary update rule (a) will lead to Algorithm~\ref{alg:case1}.
\begin{algorithm}
\caption{The proposed algorithm for solving \eqref{eq:case1}}
\label{alg:case1}
\begin{algorithmic}
\small
\STATE initialize $\bA$ randomly such that $\|\bA\|_F^2\leq \beta$
\REPEAT
\STATE $\tau_a \leftarrow \sigma_{\max}^2(\bX)$
\STATE $\bX \leftarrow \bX - \mathcal{S}_{\frac{\lambda}{\tau_a}}(\bX - \frac{1}{\tau_a}\bA^T (\bA\bX-\bY))$
\STATE  $\bA \leftarrow \bY\bX^T (\bX\bX^T+\theta \bI)^{-1}$
\UNTIL some convergence criterion is met
\normalsize
\end{algorithmic}
\end{algorithm}
In this algorithm, $\sigma_{\max}(\cdot)$ denotes the maximum singular value; $\theta\geq 0 $ is the Lagrange multiplier of the constraint $\|\bA\|_F^2 \leq \beta$ which can be found using one dimensional search algorithms such as bisection or Newton. The notation $\mathcal{S}(\cdot)$ denotes the component-wise soft shrinkage operator, i.e., $\mathbf{B} = \mathcal{S}_\gamma(\mathbf{C})$ if
\small
\[
\small
\mathbf{B}_{ij} = \left\{\begin{array}{ll}
\mathbf{C}_{ij} - \gamma & {\rm if} \; \mathbf{C}_{ij} > \gamma\\
0 & {\rm if} \; -\gamma\leq \mathbf{C}_{ij} \leq \gamma\\
\mathbf{C}_{ij} + \gamma & {\rm if} \; \mathbf{C}_{ij} < -\gamma\\
\end{array}\right.
\]
\normalsize
where $\mathbf{B}_{ij}$ and $\mathbf{C}_{ij}$ denote the $(i,j)$-th component of the matrices $\mathbf{B}$ and $\mathbf{C}$, respectively.\\

\noindent{\it Case II: Constraining the norm of each dictionary atom}\\
In many applications, it is of interest to constrain the norm of each dictionary atom, i.e., the dictionary is learned by solving:
\small
\begin{equation}
\small
\label{eq:case2}
\min_{\bA,\bX}\;\; \frac{1}{2}\|\bY-\bA\bX\|_F^2 + \lambda \|\bX\|_1\quad\st\; \|\ba_i\|_F^2\leq \beta_i, \;\forall \;i
\normalsize
\end{equation}
\normalsize
In this case, the dictionary update rule (a) cannot be expressed in closed form;  as an alternative, we can use the update rule (b), which is in closed form, in place of (a). This gives Algorithm~\ref{alg:case2}.
\begin{algorithm}
\caption{The proposed algorithm for solving \eqref{eq:case2} and \eqref{eq:case3}}
\label{alg:case2}
\begin{algorithmic}
\small
\STATE For solving \eqref{eq:case2}: initialize $\bA$ randomly s.t. $\|\ba_i\|_F^2\leq \beta_i,\;\forall\; i$
\STATE For solving \eqref{eq:case3}: initialize $\|\bA\|_F^2\leq \beta$ and $\bA\geq 0$
\REPEAT
\STATE $\tau_x \leftarrow \sigma_{\max}^2(\bA)$
\STATE  For solving \eqref{eq:case2}: $\bX \leftarrow \bX - \mathcal{S}_{\frac{\lambda}{\tau_x}}(\bX - \frac{1}{\tau_x}\bA^T (\bA\bX-\bY))$
\STATE  For solving \eqref{eq:case3}: $\bX \leftarrow \mathcal{P}_{\mathcal{X}}\left(\bX - \frac{1}{\tau_x}\bA^T (\bA\bX-\bY) - \lambda\right)$
\STATE $\tau_a \leftarrow \sigma_{\max}^2(\bX)$
\STATE  $\bA \leftarrow \mathcal{P}_\mathcal{A}  \left(\bA - \frac{1}{\tau_a} (\bA\bX - \bY) \bX^T\right)$
\UNTIL some convergence criterion is met
\normalsize
\end{algorithmic}
\end{algorithm}
In this algorithm, the set $\mathcal{A}$ is defined as $\mathcal{A} \triangleq \{\bA \;\big| \;\|\ba_i\|_F^2\leq \beta_i,\;\forall\; i \}$\\

\noindent{\it Case III: Non-negative dictionary learning with the total norm constraint}\\
Consider the non-negative dictionary learning problem for sparse representation:
\small
\begin{equation}
\label{eq:case3}
\min_{\bA,\bX}\;\; \frac{1}{2}\|\bY-\bA\bX\|_F^2 + \lambda \|\bX\|_1 \quad \st\;\;  \|\bA\|_F^2\leq \beta, \; \bA,\bX \geq 0
\end{equation}
\normalsize
Utilizing the update rule (b) leads to Algorithm~\ref{alg:case2}. Note that in this case, projections to the sets $\mathcal{X} = \{\bX \mid \bX\geq0\}$ and $\mathcal{A} = \{\bA \mid \|\bA\|_F^2\leq \beta,\bA\geq 0\}$ are simple. In particular, to project to the set $\mathcal{A}$, we just need to first project to the set of nonnegative matrices first and then project to the set $\tilde{\mathcal{A}} = \{\bA \mid \|\bA\|_F^2\leq \beta\}$.

It is worth noting that Algorithm~\ref{alg:case2} can also be applied to the case where $\mathcal{A} = \{\bA \mid \bA\geq 0, \; \|\ba_i\|_F^2\leq \beta_i,\; \forall\; i\}$, since the projection to the constraint set still remains simple. \\

\noindent{\it Case IV: Sparse non-negative matrix factorization}\\
In some applications, it is desirable to have a sparse non-negative dictionary; see, e.g., \cite{hoyer2004non,potluru2013block,kim2008sparse}. In such cases, we can formulate the dictionary learning problem as:
\small
\begin{equation}
\label{eq:case4}
\min_{\bA,\bX}\; \frac{1}{2}\|\bY-\bA\bX\|_F^2 + \lambda \|\bX\|_1 \;\st\;   \|\ba_i\|_1\leq \theta,\;\forall\; i, \; \bA,\bX \geq 0
\end{equation}
\normalsize
It can be checked that we can again use the essentially same steps of the algorithm in case III to solve \eqref{eq:case4}. The only required modification is in the projection step since the projection should be onto the set $\mathcal{A} = \{\bA\mid\bA\geq 0, \|\ba_i\|_1\leq \theta,\;\forall\; i\}$. This step can be performed in a column-wise manner by updating each column $\ba_i$ to $[\ba_i - \rho_i \mathbf{1}]_+$, where $[\cdot]_+$ denotes the projection to the set of nonnegative matrices and $\rho_i\in \mathbb{R}^+$ is a constant that can be determined via one dimensional bisection.  The resulting algorithm is very similar (but not identical) to the one in \cite{hoyer2004non}. However, unlike the algorithm in \cite{hoyer2004non}, all of our proposed algorithms are theoretically guaranteed to converge, as shown  in Theorem~\ref{thm:convergenceBSUM}.
\begin{thm}
\label{thm:convergenceBSUM}
The iterates generated by the algorithms in cases I-IV converge to the set of stationary points of the corresponding optimization problems.
\end{thm}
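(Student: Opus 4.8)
The plan is to identify each of the four algorithms as a special case of the block successive upper-bound minimization (BSUM) scheme of \cite{razaviyayn2013unified}, applied to the two-block splitting of the variables into $\bA$ and $\bX$, and then to invoke the convergence guarantee of that framework. Write the objective in each case as $\fd(\bY,\bA,\bX)=\fd_1(\bY,\bA,\bX)+\fd_2(\bX)$ with $\fd_1(\bY,\bA,\bX)=\tfrac12\|\bY-\bA\bX\|_F^2$ and $\fd_2(\bX)=\lambda\|\bX\|_1$ (in Cases~III--IV, where $\bX\ge\bzero$, this penalty is linear, $\lambda\|\bX\|_1=\lambda\,\mathbf{1}^T\bX\mathbf{1}$, and may equivalently be folded into $\fd_1$, matching the $-\lambda$ term in Algorithm~\ref{alg:case2}). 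First I would record the elementary facts that $\fd_1$ is $C^\infty$, that $\fd_2$ is continuous and convex, and that $\mathcal{A}$ and $\mathcal{X}$ are closed and convex in each of \eqref{eq:case1}--\eqref{eq:case4}; these are the standing structural hypotheses of the framework.

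The heart of the argument is to verify that the subproblems solved by the algorithms actually minimize valid surrogates of $\fd$. Because $\fd_1$ is bilinear, for fixed $\bX^r$ the map $\bA\mapsto\nabla_\bA\fd_1(\bY,\bA,\bX^r)=(\bA\bX^r-\bY)(\bX^r)^T$ is affine with constant Jacobian governed by $\bX^r(\bX^r)^T$, so the second-order Taylor expansion of $\fd_1(\bY,\cdot,\bX^r)$ is \emph{exact} and the choice $\tau_a^r=\sigma_{\max}^2(\bX^r)=\|\bX^r(\bX^r)^T\|$ makes the displayed quadratic inequality for the $\bA$-block hold globally; symmetrically, $\tau_x^r=\sigma_{\max}^2(\bA^{r+1})$ makes \eqref{eq:Xupperbound} hold once the untouched term $\fd_2$ is added back. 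Consequently, in Case~I the rule~(a) update minimizes the exact restriction $\fd(\bY,\cdot,\bX^r)$ (a tight surrogate), and in Cases~II--IV the rule~(b) update, as well as the $\bX$-update in all cases, minimizes a genuine global majorizer of $\fd$; moreover each such surrogate agrees with $\fd$ at the current iterate and has the same one-sided directional derivatives there, since the proximal term $\tfrac{\tau}{2}\|\cdot-\cdot^r\|_F^2$ has vanishing gradient at the anchor point and $\fd_2$ enters the surrogate unmodified. This verifies every assumption the BSUM framework places on the surrogates.

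Next I would discharge the regularity conditions needed to pass from ``the objective value is non-increasing'' to ``every limit point is stationary''. Monotonicity and hence convergence of $\{\fd(\bY,\bA^r,\bX^r)\}$ are immediate, and the iterates are bounded: $\mathcal{A}$ is compact in each case ($\|\bA\|_F^2\le\beta$, or $\|\ba_i\|_F^2\le\beta_i\;\forall i$, or $\|\ba_i\|_1\le\theta\;\forall i$, optionally intersected with the nonnegative orthant), while the $\bX$-sublevel sets of $\fd$ are bounded because $\fd\ge\lambda\|\bX\|_1\ge\lambda\|\bX\|_F$ for $\lambda>0$; thus $\{(\bA^r,\bX^r)\}$ stays in a compact set and has limit points. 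The $\bX$-subproblem is strongly convex (its surrogate contains $\tfrac{\tau_x^r}{2}\|\bX-\bX^r\|_F^2$ with $\tau_x^r>0$, which holds whenever $\bA^{r+1}\neq\bzero$ --- guaranteed when $\bY\neq\bzero$, or enforceable by the safeguard $\tau_x^r\leftarrow\max\{\sigma_{\max}^2(\bA^{r+1}),\varepsilon\}$), so it has a unique minimizer; since the two-block BSUM result requires a unique per-block minimizer for only one of the two blocks, the possibly non-unique $\bA$-update of rule~(a) (which can occur when $\bX^r$ is rank deficient) is harmless, and under rule~(b) the $\bA$-subproblem is strongly convex as well. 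Invoking the convergence theorem of \cite{razaviyayn2013unified} then shows that every limit point of $\{(\bA^r,\bX^r)\}$ is a stationary point of the corresponding problem among \eqref{eq:case1}--\eqref{eq:case4}, and a standard compactness argument (if $\mathrm{dist}((\bA^r,\bX^r),\mathcal{S})\ge\delta$ along a subsequence, extract a convergent sub-subsequence whose limit would then lie in $\mathcal{S}$, a contradiction) upgrades this to convergence to the stationary set $\mathcal{S}$.

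I expect the main obstacle to be the second step: correctly identifying the Lipschitz moduli $\sigma_{\max}^2(\bX^r)$ and $\sigma_{\max}^2(\bA^{r+1})$ and checking that the resulting proximal surrogates are simultaneously \emph{global} upper bounds of $\fd$ and first-order consistent with $\fd$ at the anchor point in the presence of the nonsmooth $\ell_1$ term --- this is precisely what makes the inexact, closed-form updates legitimate majorization steps. The boundedness and uniqueness bookkeeping, and the different handling of the constraint projections across Cases~I--IV, are comparatively routine once the surrogate structure is in place.
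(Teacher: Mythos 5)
Your proposal is correct and follows the same route as the paper: the paper's entire proof consists of observing that each algorithm in Cases I--IV is a special case of the block successive upper-bound minimization framework of \cite{razaviyayn2013unified} and invoking its convergence theorem. You simply carry out explicitly the surrogate verification (exactness/majorization with $\tau=\sigma_{\max}^2$, first-order consistency, boundedness, and per-block uniqueness) that the paper leaves implicit in that citation.
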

\noindent{\it Proof:} Each of the proposed algorithms in cases I-IV  is a special case of the block successive upper-bound minimization approach \cite{razaviyayn2013unified}. Therefore,  \cite[Theorem 2]{razaviyayn2013unified} guarantees the convergence of the proposed methods.

\subsection{Constraining the goodness of fit}
In some practical applications, the goodness of fit level may be known \emph{a-priori}. In these cases, we may be interested in finding the sparsest representation of the data for a given goodness of fit level. In particular, for a given $\alpha>0$, we consider
\begin{equation}
\label{eq:GoFConstrained}
\min_{\bA,\bX}\;\; \|\bX\|_1 \quad
\st\; \fd (\bY,\bA,\bX) \leq \alpha, \;\bA \in \mathcal{A}, \; \bX \in \mathcal{X}.
\end{equation}
For example, when the noise level is known, the goodness of fit function can be set as $\fd(\bY,\bA,\bX) = \|\bY-\bA\bX\|_F^2$. We propose an efficient method (Algorithm~\ref{alg:GoFConstrained}) to solve~\eqref{eq:GoFConstrained}, where the constant $\tau_x$ is chosen according to criterion in \eqref{eq:Xupperbound}.\\
\begin{algorithm}
\caption{The proposed algorithm for solving \eqref{eq:GoFConstrained}}
\label{alg:GoFConstrained}
\begin{algorithmic}
\small
\STATE initialize $\bA$ randomly s.t. $\bA \in \mathcal{A}$ and find a feasible $\bX$
\REPEAT
\STATE $\bar\bX \leftarrow \bX$
\STATE $\bX \leftarrow \arg\min_{\bX \in \mathcal{X}} \; \|\bX\|_1\;\; \st\; \fd_1(\bY,\bA,\bar\bX) + \langle \nabla_\bX \fd_1(\bY,\bA,\bar\bX) , \bX - \bar\bX\rangle + \frac{\tau_x}{2} \|\bX-\bar\bX\|_F^2 + \fd_2(\bX) \leq \alpha$
\STATE  $\bA \leftarrow \arg\min_{\bA \in \mathcal{A}} \;\fd(\bY,\bA,\bX) $
\UNTIL some convergence criterion is met
\normalsize
\end{algorithmic}
\end{algorithm}

It is clear that Algorithm~\ref{alg:GoFConstrained} is not a special case of block coordinate descent method \cite{bertsekas1999nonlinear}
or even the block successive upper-bound minimization method \cite{razaviyayn2013unified}. Nonetheless, the convergence of Algorithm~\ref{alg:GoFConstrained} is guaranteed in light of the following theorem.
\begin{thm}
Assume that $(\bar\bX,\bar\bA)$ is a limit point of the iterates generated by Algorithm~\ref{alg:GoFConstrained}. Furthermore, assume that the subproblem for updating $\bX$ is strictly feasible at $(\bar\bX,\bar\bA)$, i.e., there exists $\tilde\bX \in \mathcal{X}$ such that
\small
$
\small
\fd_1(\bY,\bar\bA,\bar\bX) + \langle \nabla_\bX \fd_1(\bY,\bar\bA,\bar\bX) , \tilde\bX - \bar\bX\rangle + \frac{\tau_x}{2} \|\tilde\bX-\bar\bX\|_F^2 + \fd_2(\tilde\bX) < \alpha.
\normalsize
$
Then $(\bar\bX,\bar\bA)$ is a stationary point of \eqref{eq:GoFConstrained}.
\normalsize
\end{thm}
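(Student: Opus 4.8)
\noindent\emph{Proof proposal.} The plan is to verify that $(\bar\bX,\bar\bA)$ is a KKT point of~\eqref{eq:GoFConstrained} by letting the iteration index $r$ run to infinity along a subsequence with $(\bA^{r},\bX^{r})\to(\bar\bA,\bar\bX)$, inside the first-order optimality conditions of the two subproblems of Algorithm~\ref{alg:GoFConstrained}. Let $\hat g_r$ be the convex surrogate used in the $\bX$-update at iteration $r+1$,
\[
\hat g_r(\bX)=\fd_1(\bY,\bA^{r},\bX^{r})+\langle\nabla_\bX\fd_1(\bY,\bA^{r},\bX^{r}),\bX-\bX^{r}\rangle+\frac{\tau_x}{2}\|\bX-\bX^{r}\|_F^2+\fd_2(\bX).
\]
By the choice of $\tau_x$ (the analogue of~\eqref{eq:Xupperbound} at the current block), $\hat g_r$ is a global upper bound of $\fd(\bY,\bA^{r},\cdot)$ that is tight at $\bX^{r}$. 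From this I would first show by induction that every iterate is feasible for~\eqref{eq:GoFConstrained} and that $\{\|\bX^{r}\|_1\}$ is nonincreasing: if $\fd(\bY,\bA^{r},\bX^{r})\le\alpha$ then $\bX^{r}$ is feasible for the $\bX$-subproblem (tightness), so $\|\bX^{r+1}\|_1\le\|\bX^{r}\|_1$, and $\fd(\bY,\bA^{r},\bX^{r+1})\le\hat g_r(\bX^{r+1})\le\alpha$, after which the exact minimization over $\bA\in\mathcal A$ gives $\fd(\bY,\bA^{r+1},\bX^{r+1})\le\fd(\bY,\bA^{r},\bX^{r+1})\le\alpha$. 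Hence $\{\|\bX^{r}\|_1\}$ converges, with limit $\|\bar\bX\|_1$.

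I would then pass to the limit block by block. For $\bA$: since $\bA^{r}\in\arg\min_{\bA\in\mathcal A}\fd(\bY,\bA,\bX^{r})$, we have $\fd(\bY,\bA^{r},\bX^{r})\le\fd(\bY,\bA,\bX^{r})$ for all $\bA\in\mathcal A$, and continuity of $\fd$ gives in the limit that $\bar\bA$ minimizes $\fd(\bY,\cdot,\bar\bX)$ over $\mathcal A$, i.e.\ $-\nabla_\bA\fd_1(\bY,\bar\bA,\bar\bX)\in N_{\mathcal A}(\bar\bA)$, where $N_{\mathcal A}$ is the normal cone of $\mathcal A$. For $\bX$: let $S_\infty$ be the convex problem $\min_{\bX\in\mathcal X}\|\bX\|_1$ s.t. $\hat g_\infty(\bX)\le\alpha$, where $\hat g_\infty$ is the surrogate formed at $(\bar\bA,\bar\bX)$. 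Passing feasibility of the iterates to the limit gives $\hat g_\infty(\bar\bX)=\fd(\bY,\bar\bA,\bar\bX)\le\alpha$, so $\bar\bX$ is feasible for $S_\infty$; I claim it is optimal for $S_\infty$. If not, choose $\hat\bX\in\mathcal X$ feasible for $S_\infty$ with $\|\hat\bX\|_1<\|\bar\bX\|_1$; using the strictly feasible $\tilde\bX$ from the hypothesis and convexity of $\hat g_\infty$, a convex combination $(1-\lambda)\hat\bX+\lambda\tilde\bX$ with $\lambda>0$ small is strictly feasible for $S_\infty$ and still has $\ell_1$-norm below $\|\bar\bX\|_1$; by continuity of $\hat g_r$ in the parameters it is feasible for the $\bX$-subproblem for all large $r$ in the subsequence, forcing $\|\bX^{r+1}\|_1<\|\bar\bX\|_1$ and contradicting $\|\bX^{r+1}\|_1\to\|\bar\bX\|_1$. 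Since the hypothesis is exactly Slater's condition for $S_\infty$, there is $\bar\mu\ge0$ with $0\in\partial\|\bar\bX\|_1+\bar\mu\,\partial\hat g_\infty(\bar\bX)+N_{\mathcal X}(\bar\bX)$ and $\bar\mu(\hat g_\infty(\bar\bX)-\alpha)=0$.

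It remains to assemble these into the KKT system of~\eqref{eq:GoFConstrained}. Gradient consistency of the surrogate — the proximal term $\frac{\tau_x}{2}\|\bX-\bar\bX\|_F^2$ has zero gradient at $\bX=\bar\bX$ — gives $\partial\hat g_\infty(\bar\bX)=\nabla_\bX\fd_1(\bY,\bar\bA,\bar\bX)+\partial\fd_2(\bar\bX)$, which is the $\bX$-subdifferential of $\fd(\bY,\bar\bA,\bar\bX)$, and $\hat g_\infty(\bar\bX)=\fd(\bY,\bar\bA,\bar\bX)$. Substituting these into the conclusions of the previous paragraph reproduces exactly the $\bX$-stationarity and the complementarity condition of~\eqref{eq:GoFConstrained}, while multiplying the $\bA$-inclusion by $\bar\mu\ge0$ (a normal cone is a cone) yields $-\bar\mu\nabla_\bA\fd_1(\bY,\bar\bA,\bar\bX)\in N_{\mathcal A}(\bar\bA)$, the $\bA$-stationarity. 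Together with feasibility from the first paragraph, this is a full KKT point of~\eqref{eq:GoFConstrained} with multiplier $\bar\mu$, so $(\bar\bX,\bar\bA)$ is stationary.

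The main obstacle is the optimality claim for the $\bX$-step: a limit point of the minimizers of the successively convexified subproblems need not solve the limiting subproblem, because the feasible set $\{\bX\in\mathcal X:\hat g_r(\bX)\le\alpha\}$ may change discontinuously with the parameters $(\bA^{r},\bX^{r})$ when the surrogate constraint becomes active. The strict-feasibility hypothesis is precisely what removes this pathology — it both drives the perturbation argument above and serves as the constraint qualification producing the multiplier $\bar\mu$. The remaining steps (feasibility and monotone descent, the block limits, gradient consistency) are routine given the standing smoothness, convexity and closedness assumptions on $\fd_1,\fd_2,\mathcal A,\mathcal X$.
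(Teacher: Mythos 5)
Your proposal is correct and follows essentially the same route as the paper's own analysis, which (since the proof of this specific theorem is omitted) appears in Appendix Part II for the general successive convex approximation scheme: feasibility and monotone decrease of the objective, a fixed strictly feasible point plus continuity of the surrogate in its parameters to show the limit point solves the limiting convexified subproblem, and then Slater's condition with gradient/value consistency of the surrogate to extract the KKT conditions. The only addition in your write-up is the (routine) handling of the exactly-minimized $\bA$-block and the assembly of the joint KKT system with a single multiplier $\bar\mu$, which is exactly what is needed to specialize the appendix argument to Algorithm~\ref{alg:GoFConstrained}.
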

This theorem is similar to \cite[Property 3]{hong2011sequential}. However, the proof here is different due to the lack of smoothness in the objective function. The proof is omitted due to the space limitation. \\


\section{Numerical Experiments}
\label{sec:illust}

In this section, we apply the proposed sparse dictionary learning method, namely algorithm \ref{alg:case2}, to the image denoising application; and compare its performance with that of the K-SVD algorithm proposed in \cite{elad2006image} (and summarized in Algorithm~\ref{alg:imageDenoise}). As a test case, we use the image of Lena corrupted by additive Gaussian noise with various variances ($\sigma^2$).

In Algorithm~\ref{alg:imageDenoise}, $\bR_{i,j}\bS$ denotes the image patch centered at $(i,j)$ coordinate. In step $2$, dictionary $\bA$ is trained to sparsely represent {\it noisy} image patches by using either K-SVD algorithm or Algorithm \ref{alg:case2}. The term $\bx_{i,j}$ denotes the sparse representation coefficient of the patch~$(i,j)$. In K-SVD, it (approximately) solves $\ell_0$-norm regularized problem \eqref{KSVD_dict} by using orthogonal matching pursuit (OMP) to update $\bX$.
In our approach, we use Algorithm~\ref{alg:case2} with $\mathcal{A} = \{\bA \mid \|\ba_i\| \leq 1, \forall\; i = 1,\cdots, N \}$ to solve the $\ell_1$-penalized dictionary learning formulation \eqref{case2_dict}. We set
$
\mu_{i,j} = c(0.0015 \sigma + 0.2), \;\forall\; i,j,
$ in~\eqref{case2_dict} with $c = \frac{1}{I\times J} \sum_{i,j} \|\bR_{i,j} S\|_2$, and $I \times J$ denotes the total number of image patches. This choice of the parameter $\mu_{ij}$ intuitively  means that we emphasize on sparsity more in the presence of stronger noise. Numerical values $(0.0015,0.2)$ are determined experimentally. The final denoised image $\bS$ is obtained by \eqref{Supdate} and setting $\beta=30/\sigma$, as suggested in \cite{elad2006image}.

\begin{figure}
  \centering
  \includegraphics[width=10.6cm]{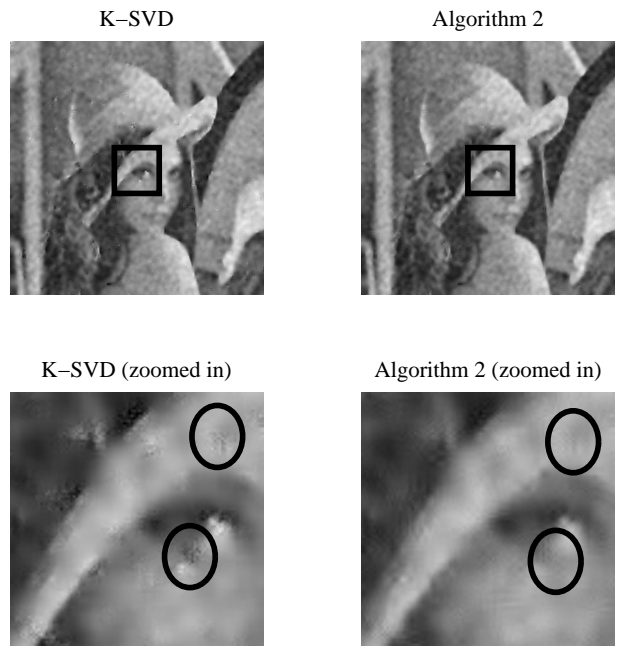}\\
  \vspace{-0.9cm}
  \caption{\footnotesize Sample denoised images ($\sigma = 100$). }\label{bar_imag}
  \vspace{-0.2cm}
\end{figure}
\begin{table}[h]
\centering
\small
\vspace{-0.3cm}
\begin{tabular}{|c|c|c|c|}
  \hline
  {\small$\sigma$/PSNR } & {\small  DCT } & {\small  K-SVD } & {\small  Algorithm \ref{alg:case2}} \\
  {\small 20/22.11} & {\small 32 } & {\small  \textbf{32.38}    } & {\small  30.88} \\
  {\small 60/12.57} & {\small  26.59 } & {\small  \textbf{26.86} } & {\small  26.37} \\
  {\small 100/8.132} & {\small  24.42 } & {\small  24.45 } & {\small  \textbf{24.46}} \\
  {\small 140/5.208} & {\small  22.96 } & {\small  22.93 } & {\small  \textbf{23.11} } \\
  {\small 180/3.025} & {\small  21.73 } & {\small  21.69 } & {\small  \textbf{21.96} }\\
  \hline
\end{tabular}
\caption{\footnotesize Image denoising result comparison on ``Lena" for different noise levels. Values are averaged over $10$ Monte Carlo simulations.}
\label{table_psnr}
\end{table}
\normalsize


\begin{algorithm}[h]
\caption{Image denoising using K-SVD or algorithm \ref{alg:case2}}
\label{alg:imageDenoise}
\begin{algorithmic}[1]
\small
\REQUIRE noisy image $\bY$, noise variance $\sigma^2$
\ENSURE denoised image $\bS$
\STATE Initialization: $\bS = \bY$, $\bA=\text{overcomplete DCT dictionary}$
\STATE Dictionary learning:\\
       K-SVD:
       \small
       \vspace{0cm}
       \begin{align} \label{KSVD_dict}
       \vspace{0cm}
       \min_{\bA,\bX}  \sum_{i,j} \mu_{ij}\|\bx_{i,j}\|_0 + \sum_{i,j} \|\bA \bx_{i,j}  - \bR_{i,j}\bS\|^2
       \end{align}
       \normalsize
       Algorithm \ref{alg:case2}:
       \small
       \begin{align} \label{case2_dict}
       \min_{\bA \in \mathcal{A},\bX}  \sum_{i,j} \mu_{ij}\|\bx_{i,j}\|_1 + \sum_{i,j} \|\bA \bx_{i,j}  - \bR_{i,j}\bS\|^2
       \end{align}
       \normalsize
\STATE $\bS$ update:
\small
    \begin{align} \label{Supdate}
    \bS  = (\beta \bI + \sum_{i,j} \bR_{i,j}^{T}\bR_{i,j} )^{-1} (\beta \bY + \sum_{i,j} \bR_{i,j}^{T}\bA \bx_{i,j})
    \end{align}
    \normalsize
    \normalsize
\end{algorithmic}
\end{algorithm}

The final peak signal-to-noise ratio (PSNR) comparison is summarized in Table~\ref{table_psnr}; and sample images are presented in Figure~\ref{bar_imag}. As can be seen in Table~\ref{table_psnr}, the resulting PSNR values of the proposed algorithm are comparable with the ones obtained by  K-SVD.
However, visually, K-SVD produces more noticeable artifacts (see the circled spot in Figure~\ref{bar_imag}) than our proposed algorithm. The artifacts may be due to the use of OMP in K-SVD which is less robust to noise than the $\ell_1$-regularizer used in  Algorithm~\ref{alg:case2}.
As for the CPU time, the two algorithms perform similarly in the numerical experiments.

\noindent{\bf Acknowledgment:} The authors are grateful to the University of Minnesota Graduate School Doctoral Dissertation Fellowship support during this research.

\appendix[Part I: NP-hardness Proof]

\noindent\textbf{Proof of Claim 1:} This proof is exactly the same as the proof in \cite{aloise2009np}. Here we restate the proof since some parts of the proof is necessary for the proof of Claim 2. Consider a feasible point $(A', X')$ of problem \eqref{eq:DCP}. Clearly, in any column of the matrix $X'$, either the first component is zero, or the second one. This gives us a partition of the columns of the matrix $X'$ (which is equivalent to a partition over the nodes of the graph). Let $P$ (resp. $Q$) be the set of columns of $X'$ for which the first (resp. the second) component is nonzero at the optimality. Define $p \triangleq |P|$ and $q = |Q|$. Then the optimal value of the matrix $\bA = [\ba_1 \ba_2]$ is given by:
\begin{itemize}
\item $a_{j1} = \pm \frac{1}{p}$, $a_{j2} = \mp \frac{1}{q}$ if $j \in E(P,Q)$
\item $a_{j1} = a_{j2} = 0$ if $j \notin E(P,Q)$
\end{itemize}
where $a_{ji}$ is the $j$-th component of column $i$ in matrix $\bA$. Plugging in the optimal value of the matrix $\bA$, the objective function of \eqref{eq:DCP} can be rewritten as:
\begin{align}
\|\bY' - \bA' \bX'\|_F^2 &= \sum_{i\in P} \|\by'_i - \ba'_1\|^2 + \sum_{i\in Q} \|\by'_i - \ba'_2\|^2 \nonumber\\
& = \sum_{j \notin E(P,Q)} 2 + \sum_{j \in E(P,Q)} \left[(1-\frac{1}{p})^2 + \frac{p-1}{p^2} +(1-\frac{1}{q})^2 + \frac{q-1}{q^2}\right] \nonumber\\
& = 2\left(|E| -|E(P,Q)|\right) + |E(P,Q)|(\frac{p-1}{p} + \frac{q-1}{q}) \nonumber\\
& = 2|E| - |E(P,Q)| (\frac{1}{p} + \frac{1}{q})\nonumber\\
& = 2|E| - |V|\frac{|E(P,Q)|}{p\cdot q} = 2 n - N \frac{|E(P,Q)|}{p.q}. \label{eq:ObjDCP}
\end{align}
Hence, clearly, solving $\eqref{eq:DCP}$ is equivalent to solving the densest cut problem on graph $\mathcal{G}$. $\blacksquare$\\

\noindent\textbf{Proof of Claim 2:}
According to the proof of Claim 1, we can write
\begin{align}
\bigg| \|\bY' - \bA'_1 \bX'_1\|_F^2 -\|\bY' - \bA'_2 \bX'_2\|_F^2 \bigg| &= N \bigg|\frac{|E(P_1,Q_1)|}{p_1 q_1} - \frac{|E(P_2,Q_2)|}{p_2 q_2}\bigg| \nonumber\\
& \geq \frac{N}{p_1 (N-p_1) p_2 (N-p_2)} \nonumber\\
& \geq \frac{N}{(N/2)^2} = \frac{16}{N^3}.\quad \;\;\;\blacksquare \nonumber
\end{align}

\noindent\textbf{Proof of Claim 3:}
First of all, notice that the point
\[
\bX = \left[
\begin{array}{cccc}
1 & 1 & \cdots & 1 \\
0 & 0 & \cdots & 0 \\
\end{array}
\right] \quad {\rm and} \quad
\bA =
\left[
\begin{array}{cc}
M & M \\
0 & 0 \\
\vdots & \vdots\\
0 & 0 \\
\end{array}
\right]
\]
is feasible and it should have a higher objective value than the optimal one. Therefore,
\[
\sum_{i=1}^N (M - M(x^*_{1i} + x^*_{2i}))^2 + h(\bw) \leq \|\bY'\|_F^2 = 2|E| \leq 2N^2
\]
which in turn implies that
\begin{align}
\max_i\{|1 - x^*_{1i} - x^*_{2i}|\} \leq \frac{\sqrt{2}N}{M} =\frac{1}{3N^6} \triangleq \delta, \label{eq:deltabound}
\end{align}
since $h(\bw) \geq 0$. Clearly, $\delta<\frac{1}{2}$ and moreover notice that for each $i$ only one of the elements $x^*_{1i}$ and $x^*_{2i}$ is nonzero. Therefore, any nonzero element $x^*_{ij}$ should be larger than $\frac{1}{2}$. On the other hand, due to the way that we construct $\bY'$, we have $|y'_{ij}| \leq 1, \;\forall i,j$. This implies that $|\tilde{a}_{ij}| \leq 2, \;\forall i,j$, leading to
\begin{align}
\|\tilde{\mathbf{a}}_1\|^2 , \|\tilde{\mathbf{a}}_2 \|^2\leq 4N, \label{eq:boundclmA}
\end{align}
where $\tilde{\mathbf{a}}_1$ and $\tilde{\mathbf{a}}_2$ are the first and the second column of matrix $\tilde{\bA}$. Having these simple bounds in our hands, we are now able to bound $h(\bw_+)$:
\begin{align}
h(\bw_+) = &\sum_{i \in P} \|\by'_i - \tilde{\ba}_1\|^2 + \sum_{i \in Q} \|\by'_i - \tilde{\ba}_2\|^2\nonumber\\
& = \sum_{i \in P} \|\by'_i - \tilde{\ba}_1 x_{1i}\|^2 + \sum_{i\in P} \|\ba_1\|^2 (1-x_{1i})^2 + 2\sum_{i\in P}\langle \by'_i - \tilde{\ba}_1 x_{1i} , (x_{1i}-1)\tilde{\ba}_1\rangle \nonumber\\
& + \sum_{i \in Q} \|\by'_i - \tilde{\ba}_2 x_{2i}\|^2 + \sum_{i\in Q} \|\ba_2\|^2 (1-x_{2i})^2 + 2\sum_{i\in Q}\langle \by'_i - \tilde{\ba}_2 x_{2i} , (x_{2i}-1)\tilde{\ba}_2\rangle \nonumber\\
&\leq h(\bw) + \sum_i 4N^2 \delta^2 + 2\sum_{i \in P} (\|\by'_i\| + x_{1i}\|\tilde{\ba}_1\|) \cdot \|\tilde{\ba}_1\| \cdot |1-x_{1i}| \nonumber\\
&+ 2\sum_{i \in Q} (\|\by'_i\| + x_{2i}\|\tilde{\ba}_2\|) \cdot \|\tilde{\ba}_2\| \cdot |1-x_{2i}|  \nonumber\\
& \leq h(\bw) + 4N^3 \delta^2 + 2 \sum_{i \in P} (\|\by'_i \| + 4N) 2N \delta + 2 \sum_{i \in Q}(\|\by'_i \| + 4N) 2N \delta  \nonumber\\
& \leq h(\bw) + 4N^3 \delta^2 + 4N\delta(\sqrt{N} \|\bY'\|_F) + 16 N^3\delta \nonumber\\
& \leq h(\bw) + 4N^3 \delta^2 + 4N\delta(\sqrt{N} \|\bY'\|_F) + 16 N^3\delta \nonumber\\
& \leq h(\bw) + 28 N^3\delta \leq h(\bw) +  \frac{28}{3 N^3}.  \label{eq:clm3lastineq}
\end{align}
Furthermore, since $\bw_+$ is a feasible point for \eqref{eq:DCP} and due to the optimality of $w'$, we have
\begin{align}
h(\bw') \leq h(\bw_+). \label{eq:clm3tmp1}
\end{align}
On the other hand,
\begin{align}
h(\bw) \leq h(\bw'); \label{eq:clm3tmp2}
\end{align}
otherwise, we can add the row $[M\quad  M]$ on top of $\bA'$ and get a lower objective for \eqref{eq:OriginalNPhardness}. Combining \eqref{eq:clm3lastineq}, \eqref{eq:clm3tmp1}, and \eqref{eq:clm3tmp2} will conclude the proof. $\blacksquare$

\appendix[Part II: Successive Convex Approximation]
In this part of the appendix, we analyze the performance of the successive convex approximation method which is used in the development of Algorithm~\ref{alg:GoFConstrained}. To the best of our knowledge, very little is known about the convergence of the successive convex approximation method in the general nonsmooth nonconvex setting. Hence here we state our analysis for the general case. To the best of our knowledge, the previous analysis of this method in \cite[Property 3]{hong2011sequential} is for the smooth case only and a special approximation function; where our analysis covers the nonsmooth case and it appears to be much simpler. To state our result, let us first define the successive convex approximation approach. Consider the following optimization problem:
\begin{equation}
\label{eq:SCAOriginalProb}
\begin{split}
\min_x \quad &h_0(x)\triangleq f_0 (x) + g_0(x)\\
\st \quad &h_i(x) \triangleq f_i(x) + g_i(x)\leq 0, \forall i=1,\ldots,m,
\end{split}
\end{equation}
where the function $f_i(x)$ is smooth (possibly nonconvex) and $g_i$ is convex (possibly nonsmooth), for all $i=0,\ldots,m$. A popular practical approach for solving this problem is the successive convex approximation (also known as majorization minimization) approach where at each iteration of the method, a locally tight approximation of the original optimization problem is solved subject to a tight convex restriction of the constraint sets. More precisely, we consider the successive convex approximation method in Algorithm~\ref{alg:SCA}.

\begin{algorithm}
\caption{Successive Convex Approximation Method for Solving \eqref{eq:SCAOriginalProb}}
\label{alg:SCA}
\begin{algorithmic}
\STATE Find a feasible point $x^0$ in \eqref{eq:SCAOriginalProb}, choose a stepsize $\gamma\in (0,1]$, and set $r=0$
\REPEAT
\STATE Set $r \leftarrow r+1$
\STATE Set $\hat{x}^{r}$ to be a solution of the following optimization problem
\begin{align}
\min_x \quad&\tilde{h}_0(x,x^r) \nonumber\\
\st \quad& \tilde{h}_i(x) \leq 0, \quad\forall i=1,\ldots,m. \nonumber
\end{align}
\STATE Set $x^{r+1} \leftarrow \gamma \hat{x}^r  + (1-\gamma)x^r$
\UNTIL some convergence criterion is met
\end{algorithmic}
\end{algorithm}

The approximation functions in the algorithm need to satisfy the following assumptions:
\begin{assmptn}
\label{assmptn:ApproximationFn}
Assume the approximation functions $\tilde{h}_i(\bullet,\bullet), \; \forall i=0,\ldots,m,$ satisfy the following assumptions:
\begin{itemize}
\item $\tilde{h}_i(x,y)$ is continuous in $(x,y)$
\item $\tilde{h}_i(x,y)$ is convex in $x$
\item $\tilde{h}_i(x,y) = \tilde{f}_i(x,y) + g_i(x),\;\forall x,y$
\item Function value consistency: $\tilde{f}_i (x,x) = f_i(x),\;\forall x$
\item Gradient consistency: $\nabla \tilde{f}_i(\bullet,x) (x) = \nabla f_i(x),\;\forall x$
\item Upper-bound: $\tilde{f}_i(x,y) \geq f_i(x),\;\forall x,y$
\end{itemize}
In other words, we assume that at each iteration, we approximate the original functions with some upper-bounds of them which have the same first order behavior.
\end{assmptn}
In order to state our result, we need to define the following condition:\\

\noindent\textbf{Slater condition for SCA:} Given the constraint approximation functions $\{\tilde{h}(\cdot,\cdot)\}_{i=1}^m$, we say that the Slater condition is satisfied at a given point $\bar{x}$ if there exists a point $x$ in the interior of the restricted constraint sets at the point $\bar{x}$, i.e.,
\[
\tilde{h}_i(x,\bar{x}) < 0, \quad \forall i=1,\ldots,m,
\]
for some $x$. Notice that if the approximate constraints are the same as the original constraints, then this condition will be the same as the well-known Slater condition for strong duality.\\

\begin{thm}
Let $\bar{x}$ be a limit point of the iterates generated by Algorithm~\ref{alg:SCA}. Assume Assumption~\ref{assmptn:ApproximationFn} is satisfied and Slater condition holds at the point $\bar{x}$. Then $\bar{x}$ is a KKT point of \eqref{eq:SCAOriginalProb}.
\end{thm}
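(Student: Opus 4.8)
The plan is to reduce the claim to a purely convex statement about a single surrogate subproblem ``centered'' at $\bar x$, and then read off the KKT system from convex duality using the consistency properties in Assumption~\ref{assmptn:ApproximationFn}. Concretely, I will show that $\bar x$ is an optimal solution of the convex program
\begin{equation*}
\min_{x}\ \tilde h_0(x,\bar x)\qquad \st\quad \tilde h_i(x,\bar x)\le 0,\ \ i=1,\dots,m,
\end{equation*}
and then, since the Slater condition holds at $\bar x$, write the convex KKT system of this program; because $\tilde f_i(\bar x,\bar x)=f_i(\bar x)$ (so $\tilde h_i(\bar x,\bar x)=h_i(\bar x)$) and $\nabla_x\tilde f_i(\bar x,\bar x)=\nabla f_i(\bar x)$, that system is precisely the KKT system of \eqref{eq:SCAOriginalProb} at $\bar x$.

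Two routine preliminaries come first. Feasibility of all iterates: by induction, if $x^r$ is feasible then $\tilde h_i(x^r,x^r)=h_i(x^r)\le 0$ and $\tilde h_i(\hat x^r,x^r)\le 0$, so convexity of $\tilde h_i(\cdot,x^r)$ gives $\tilde h_i(x^{r+1},x^r)\le 0$, and the upper-bound property $\tilde h_i(\cdot,x^r)\ge h_i(\cdot)$ yields $h_i(x^{r+1})\le 0$. Monotonicity of the objective: since $x^r$ is feasible for the $r$-th surrogate, optimality of $\hat x^r$ gives $\tilde h_0(\hat x^r,x^r)\le\tilde h_0(x^r,x^r)=h_0(x^r)$, and the upper-bound property together with convexity of $\tilde h_0(\cdot,x^r)$ gives $h_0(x^{r+1})\le\gamma\,\tilde h_0(\hat x^r,x^r)+(1-\gamma)h_0(x^r)\le h_0(x^r)$; in particular $h_0(x^r)-h_0(x^{r+1})\ge\gamma\big(h_0(x^r)-\tilde h_0(\hat x^r,x^r)\big)\ge 0$. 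Taking a subsequence $x^{r_j}\to\bar x$ and using continuity of $h_0$, we get $h_0(x^{r_j})\to h_0(\bar x)$; since $\{h_0(x^r)\}$ is non-increasing, the whole sequence converges to $h_0(\bar x)$.

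The core is the optimality claim, which I will prove by contradiction. Suppose there is $z$ with $\tilde h_i(z,\bar x)\le 0$ for all $i\ge1$ and $\tilde h_0(z,\bar x)<h_0(\bar x)$. Pick a Slater point $x^{\mathrm s}$ with $\tilde h_i(x^{\mathrm s},\bar x)<0$ and set $z_\theta=\theta x^{\mathrm s}+(1-\theta)z$ for small $\theta\in(0,1]$; convexity makes $\tilde h_i(z_\theta,\bar x)<0$ strictly for every $i\ge1$, while for $\theta$ small enough $\tilde h_0(z_\theta,\bar x)\le h_0(\bar x)-\eta$ for some $\eta>0$. By continuity of each $\tilde h_i$ in its second argument, $z_\theta$ is feasible for the $r_j$-th surrogate for all large $j$, hence $\tilde h_0(\hat x^{r_j},x^{r_j})\le\tilde h_0(z_\theta,x^{r_j})$, and $\limsup_j\tilde h_0(\hat x^{r_j},x^{r_j})\le\tilde h_0(z_\theta,\bar x)\le h_0(\bar x)-\eta$. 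Feeding this into the sufficient-decrease inequality $h_0(x^{r_j+1})\le\gamma\,\tilde h_0(\hat x^{r_j},x^{r_j})+(1-\gamma)h_0(x^{r_j})$ and taking $\limsup_j$ gives $h_0(\bar x)\le\gamma\big(h_0(\bar x)-\eta\big)+(1-\gamma)h_0(\bar x)=h_0(\bar x)-\gamma\eta$, a contradiction. Hence $\bar x$ solves the surrogate program centered at $\bar x$.

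Finally, $\bar x$ is feasible for \eqref{eq:SCAOriginalProb} (it is a limit of feasible points and the $h_i$ are continuous), and the Slater condition holds at $\bar x$, so the convex KKT theorem applies to the surrogate program: there exist $\lambda_i\ge 0$ with $\lambda_i\tilde h_i(\bar x,\bar x)=0$ and $0\in\nabla_x\tilde f_0(\bar x,\bar x)+\partial g_0(\bar x)+\sum_{i=1}^m\lambda_i\big(\nabla_x\tilde f_i(\bar x,\bar x)+\partial g_i(\bar x)\big)$, where the separable structure ($\tilde f_i(\cdot,\bar x)$ smooth, $g_i$ convex) justifies the subdifferential sum rule. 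Substituting $\tilde h_i(\bar x,\bar x)=h_i(\bar x)$ and $\nabla_x\tilde f_i(\bar x,\bar x)=\nabla f_i(\bar x)$ yields $\lambda_i h_i(\bar x)=0$ and $0\in\nabla f_0(\bar x)+\partial g_0(\bar x)+\sum_i\lambda_i\big(\nabla f_i(\bar x)+\partial g_i(\bar x)\big)$, i.e.\ $\bar x$ is a KKT point of \eqref{eq:SCAOriginalProb}. I expect the contradiction step to be the delicate part: one must convexify the hypothetical better point against the Slater point to obtain \emph{strict} feasibility, then use continuity of $\tilde h_i$ in the second variable to transport that strict feasibility to all sufficiently advanced surrogate subproblems so the sufficient-decrease estimate can be triggered; the mild requirement that the $g_i$ be finite-valued (so $h_0$ is genuinely continuous at $\bar x$) also enters here.
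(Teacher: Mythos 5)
Your proof is correct and follows essentially the same route as the paper's: feasibility and monotone descent of the iterates, transporting strict feasibility of a comparison point to the subproblems at $x^{r_j}$ via continuity of $\tilde{h}_i$ in its second argument, concluding that $\bar{x}$ minimizes the surrogate convex program centered at $\bar{x}$, and then invoking the convex KKT conditions together with the value/gradient consistency and upper-bound assumptions. The only cosmetic difference is that you establish the optimality of $\bar{x}$ by contradiction via a sufficient-decrease estimate, convexifying the hypothetical better point with the Slater point upfront, whereas the paper passes to the limit in $\tilde{h}_0(\hat{x}^{r_j},x^{r_j})\le\tilde{h}_0(x',x^{r_j})$ for strictly feasible $x'$ and then uses convexity plus the Slater condition to extend to all feasible points.
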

\begin{proof}
First of all since the approximate functions are upper-bounds of the original functions, all the iterates are feasible in the algorithm. Moreover, due to the upper-bound and function value consistency assumptions, it is not hard to see that 
\[
h_0(x^{r+1}) \leq \tilde{h}_0(x^{r+1},x^r) \leq \gamma\tilde{h}_0(\hat{x}^{r},x^r) + (1-\gamma)\tilde{h}_0 (x^r,x^r)\leq\tilde{h}_0(x^r,x^r) = h_0(x^r),
\]
where the second inequality is  the result of convexity of $\tilde{h}_0(\cdot,x^r)$. Hence, the objective value is nonincreasing and we must have
\begin{align}
\lim_{r \rightarrow \infty} h_0(x^r) = h_0(\bar{x}), \label{eq:limith0}
\end{align}
and
\begin{align}
\lim_{r \rightarrow \infty} \tilde{h}_0(\hat{x}^r,x^r) = h_0(\bar{x}). \label{eq:limithtilde0}
\end{align}
Let $\{x^{r_j}\}_{j=1}^\infty$ be the subsequence converging to the limit point $\bar{x}$. Consider any fixed point $x'$ satisfying 
\begin{align}
\tilde{h}_i(x',\bar{x}) < 0, \;\;\forall i=1,2,\ldots,m.\;\;\label{eq:strctlyfsble}
\end{align}
Then for $j$ sufficiently large, we must have
\[
\tilde{h}_i(x',x^{r_j}) < 0,\;\;\forall i=1,2,\ldots,m,
\]
i.e., $x'$ is a strictly feasible point at the iteration $r_j$. Therefore, 
\[
\tilde{h}_0(\hat{x}^{r_j},x^{r_j})  \leq \tilde{h}_0(x',x^{r_j}), 
\]
due to the definition of $\hat{x}^{r_j}$. Letting $j  \rightarrow \infty$ and using~\eqref{eq:limithtilde0}, we have
\[
\tilde{h}_0(\bar{x},\bar{x})  \leq \tilde{h}_0(x',\bar{x}).
\]
Notice that this inequality holds for any $x'$ satisfying $\eqref{eq:strctlyfsble}$.
Combining this fact with the convexity of $\tilde{h}_i(\cdot,\bar{x})$ and the Slater condition implies that
\begin{align}
\bar{x} \in \arg \min_x \;\;&\tilde{h}_0 (x,\bar{x}) \nonumber\\
\st \quad &\tilde{h}_i(x,\bar{x}) \leq 0,\;\forall i=1,\ldots,m. \nonumber
\end{align}
Since the Slater condition is satisfied, using the gradient consistency assumption, the KKT condition of the above optimization problem implies that there exist $\lambda_1,\ldots,\lambda_m\geq 0$ such that
\begin{align}
& 0 \in \nabla f_0 (\bar{x}) + \partial g_0(\bar{x}) + \sum_{i=1}^m \lambda_i \left(\nabla f_i(\bar{x}) + \partial g_i (\bar{x})\right) \nonumber\\
& \tilde{f}_i(\bar{x},\bar{x}) + g_i(\bar{x}) \leq 0,\;\forall i=1,\ldots,m, \nonumber\\
& \lambda_i \left(\tilde{f}_i(\bar{x},\bar{x}) + g_i(\bar{x})\right) = 0,\;\forall i=1,\ldots,m. \nonumber
\end{align}
Using the upper-bound  and the objective value consistency assumptions, we have
\begin{align}
& 0 \in \nabla f_0 (\bar{x}) + \partial g_0(\bar{x}) + \sum_{i=1}^m \lambda_i \left(\nabla f_i(\bar{x}) + \partial g_i (\bar{x})\right) \nonumber\\
& f_i(\bar{x}) + g_i(\bar{x}) \leq 0,\;\forall i=1,\ldots,m, \nonumber\\
& \lambda_i \left(f_i(\bar{x}) + g_i(\bar{x})\right) = 0,\;\forall i=1,\ldots,m, \nonumber
\end{align}
which completes the proof.
\end{proof}
It is also worth noting that  in the presence of linear constraints, the Slater condition should be considered for the relative interior of the constraint set instead of the interior.

\bibliographystyle{IEEEbib}
\bibliography{strings,refs}

\end{document}